\title{HydaLearn: Highly Dynamic Task Weighting for Multi-task Learning with Auxiliary Tasks}
\author{
    Sam Verboven,\textsuperscript{\rm 1}
    Muhammad Hafeez Chaudhary, \textsuperscript{\rm 2}
    Jeroen Berrevoets \textsuperscript{\rm 1}
    Wouter Verbeke \textsuperscript{\rm 3}\\
}
\begin{document}

\newcommand{\hafeez}[1]{\textbf{\color{brown}[HC - #1]}}
\newcommand{\jeroen}[1]{\textbf{\color{magenta}[JB - #1]}}
\newcommand{\remove}[1]{\textbf{\color{red}\st{#1}}}
\newcommand{\sam}[1]{\textbf{\color{blue}[SV - #1]}}
\newcommand{\wouter}[1]{\textbf{\color{green}[SV - #1]}}

\newcommand{\mc}{\mathcal}

\newtheorem{theorem}{Theorem}




\maketitle

\begin{abstract}
Multi-task learning (MTL) can improve performance on a task by sharing representations with one or more related auxiliary-tasks. Usually, MTL-networks are trained on a composite loss function formed by a constant weighted combination of the separate task losses. In practice, constant loss weights lead to poor results for two reasons: (i) the relevance of the auxiliary tasks can gradually drift throughout the learning process; (ii) for mini-batch based optimisation, the optimal task weights vary significantly from one update to the next depending on mini-batch sample composition. We introduce HydaLearn, an intelligent weighting algorithm that connects main-task gain to the individual task gradients, in order to inform dynamic loss weighting at the mini-batch level, addressing i and ii. Using HydaLearn, we report performance increases on synthetic data, as well as on two supervised learning domains.
\end{abstract}

\section{Introduction.}
Through joint training of shared representations with one or more auxiliary tasks, Multi-task learning \cite{caruana1997multitask} can increase performance of neural networks on a task of interest - i.e. the main task.

\textbf{The Problem Setting.} \quad How much a given auxiliary task should influence the training process at each step is an open research question. The gain of learning from auxiliary tasks depends on their back propagated gradients that contribute in learning the main task. While training on a composite multi-task loss has the potential to provide a richer, better regularized training signal, MTL networks are not straightforward to train. Auxiliary tasks in fact do not always contribute to better predictions for the main task. In particular, when the signal is not sufficiently relevant, performance on the main task can deteriorate. Moreover, the usefulness of the auxiliary task gradients is subject to changes over the course of training process \cite{du2018adapting}.  Automated adaptive MTL is as such a high-impact problem, for which a solution would mitigate a significant part of the difficulty of training powerful MTL models. Consequently, this could lead to increased adoption of such models among practitioners. More wide-spread diffusion could occur specifically outside of high-dimensional, high-compute supervised learning problems.

\textbf{Slow Weight Adjustment is Not Enough.} \quad 
Contemporary research on adaptive task weighting either does not allow explicit prioritization of main task performance \cite{sener2018multi, chen2017gradnorm, ruder2019latent}, or requires some implicit assumptions which are frequently violated. For instance: (i) the main task gradient direction is consistently desirable \cite{du2018adapting} or; (ii) task usefulness only changes gradually over the learning process \cite{lin2019adaptive}. However, the composition of the mini-batch ultimately determines the training signal, and is expected to be highly variable. This variability is desirable for stochastic optimization of non-convex loss surfaces \cite{hochreiter1997flat}, but causes the optimal loss weighting in multi-task learning to drastically change from mini-batch to mini-batch. Accordingly, the usefulness of a single back propagated task gradient is not only dependent on: (i) the parametrization of the model; but, we argue, (ii) the sample composition of the mini-batch.
Contrary to the slower changing parametrization of the model, mini-batch statistics can differ significantly from one update to the next. Hence, slow weight adjustment over the course of training does not suffice. 

\begin{figure}[t] 
	\begin{center}
		\includegraphics[width=\columnwidth]{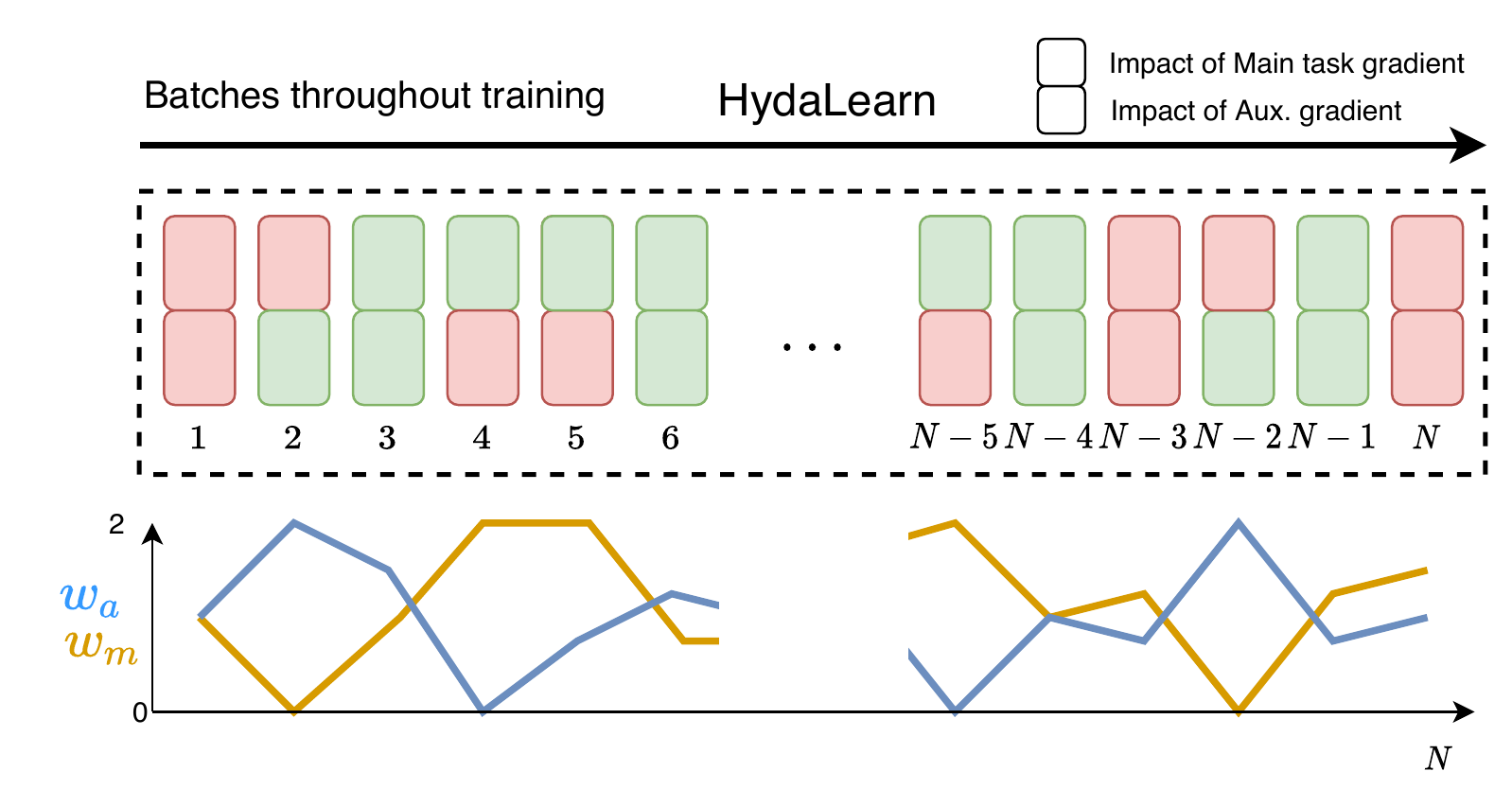}
		\caption{A green box indicates \textbf{main task} performance would improve through updating the respective loss for the mini-batch under review. Conversely, red indicates main task performance degradation}
		\label{fig_mtl_basic}
	\end{center}
\end{figure}

\textbf{Our Contribution.} \quad
We introduce and evaluate HydaLearn; an adaptive task weighting algorithm designed to handle high variance estimates of the exact gradients. On the level of the individual mini-batch, we estimate the usefulness of individual task gradients based on a metric of choice. Subsequently, these usefulness measures are used to compute task weights which are in turn used to construct a composite gradient. This composite gradient then updates the model parameters. 

We extensively evaluate HydaLearn on a synthetic toy example as well as two real-world datasets, and demonstrate the effectiveness of the algorithm compared to a both logical and state-of-the-art baselines. The latter datasets comprise two separate supervised learning tasks. First, we perform in-hospital mortality prediction, aided by length-of-stay prediction using the MIMIC-III dataset \cite{johnson2016mimic}. Second, we cast default prediction as MTL through inclusion of prepayment prediction as an auxiliary task, using the Fannie Mae Single Family Loan Performance dataset.

\section{Related Literature}

\textbf{Multi-task Learning.}
The benefit from MTL is dependent on sample size, number of tasks, and the intrinsic data dimensionality \cite{maurer2016benefit}. The dominant paradigm to reap this benefit is hard parameter sharing \cite{Caruana1993MultitaskLA}. In hard parameter sharing \cite{Caruana1993MultitaskLA, ruder2016overview}, all parameters in a set of hidden layers are fully shared between the tasks. This fully shared hidden layer stack (or encoder) learns a shared representation, and is generally followed by one or more layers which do not share parameters, i.e., task-specific layers. These task-specific \emph{heads} of the network are often called decoders and learn a the task-specific representation. 

In MTL, a multitask loss is usually used, composed of a linearly weighted combination of individual task losses. These weights are constants and generally chosen through expensive search procedures \cite{harutyunyan2019multitask} or Bayesian optimization \cite{guo2019autosem}. 

The objective in MTL can be either to simultaneously maximise performance of all tasks \cite{sener2018multi, chen2017gradnorm}, or leverage auxiliary tasks insofar they improve the main task \cite{lin2019adaptive, du2018adapting, zhang2014facial, ruder2019latent}, i.e., not care about the performance on the auxiliary tasks. The latter objective is the focus of this work.

\textbf{Auxiliary Task Learning.} \quad
It is generally assumed that related tasks will improve and unrelated tasks can hamper performance \cite{bingel2017identifying, rai2010infinite}, although even unrelated tasks can be exploited \cite{romera2012exploiting}. However, there is still no theoretically grounded definitive definition of task relatedness. Empirically, adding auxiliary tasks has shown to improve performance for \cite{guo2019autosem}. The concept of adding auxiliary tasks is most common in complex, high dimensional domains such as autonomous vehicle control \cite{yang2018end}, reinforcement learning \cite{du2018adapting, lin2019adaptive},  and natural language processing \cite{collobert2008unified}. 

However, MTL can also be a valid approach on common low-compute problems.
For example, the implicit data augmentation inherent to MTL can help to learn from imbalanced data \cite{caruana1996using}, which is commonly encountered in fields such as fraud detection \cite{baesens2015fraud} and default prediction \cite{baesens2005neural}. Different strands of work have proposed strategies for auxiliary task selection or training. Our work contributes to the latter, more specifically by dynamically learning the task loss weights.

\textbf{Adaptive Task Weighting in Multi-task Learning}. \quad
The optimal task loss distribution changes as parametrization of the model gradually changes. To handle this, task weighting should be dynamically adapted throughout the training cycle. 
Most of the existing work on adaptive task learning considers the setting where one tries to optimize all the included tasks together \cite{chen2017gradnorm, kendall2018multi, jean2019adaptive, sener2018multi}. Different approaches have been proposed including
explicit prioritization of difficult tasks \cite{guo2018dynamic} , minimization of negative conflict between gradients \cite{yu2020gradient, du2018adapting}, and balancing of the task gradient norms \cite{chen2017gradnorm}. Furthermore, homeostatic uncertainty can be used to inform task weighting \cite{kendall2018multi}. Finally, \cite{jean2019adaptive} focus on adapting the learning rates of the tasks, which under vanilla stochastic gradient descent (SGD) is equivalent to scaling the gradients.

\textbf{Adaptive Task Weighting for Auxiliary Tasks}. \quad
The algorithms most relevant to ours are proposed in \cite{lin2019adaptive} and \cite{du2018adapting} and aim to explicitly maximize performance on the main task \cite{lin2019adaptive, du2018adapting}. \citet{du2018adapting} only train on the auxiliary task if its gradient aligns with the main task gradient, as defined by the so-called \emph{cosine similarity} measure. In \citet{lin2019adaptive}, the relative task weights of auxiliary tasks are updated every $N$ steps using a variant of online cross-validation \cite{sutton1992adapting}, where $N$ is a hyper-parameter. 

\textbf{Small Mini-Batch Learning}. \quad
Neural networks are usually trained with the mini-batch SGD method \cite{bottou2010large} or one of its variants \cite{kingma2014adam, ruder2016overview}. A common belief among researchers dictates that smaller batch sizes yield better out-of-sample generalization \cite{smith2017bayesian}. This belief has seen theoretical and experimental validation \cite{mccandlish2018empirical,keskar2016large, zhang2018theory, golmant2018computational}, although it has also been challenged \cite{goyal2017accurate, hoffer2017train}. Nonetheless, standard practice in most domains is still to train with small mini-batches. 

\textbf{Variable Usefulness of Mini-Batches}. \quad 
When training a neural network, some data points are more valuable than others \cite{pang2019libra, li2019gradient}. By extension, through the stochastic nature of mini-batch SGD, the number of valuable or harmful examples varies over mini-batches, i.e., some mini-batches are more valuable than others. Harder examples tend to give larger gradients, and are in some cases more useful \cite{lin2017focal, oksuz2020imbalance}. Nonetheless, it has been shown that harder does not always imply more valuable \cite{li2019gradient}. Furthermore, the usefulness of individual examples can differ between tasks. Ultimately, the goal is to optimize performance on the main task. This implies that the gradient norm distribution of the auxiliary task is not informative for the usefulness of individual examples, and by extension batches. Accordingly, strategies which try to exploit such information \cite{li2019gradient} are not applicable in this context.

\section{HydaLearn}\label{sec_alg_prop}

\begin{figure*}[t] 
	\begin{center}
		\includegraphics[scale=0.85]{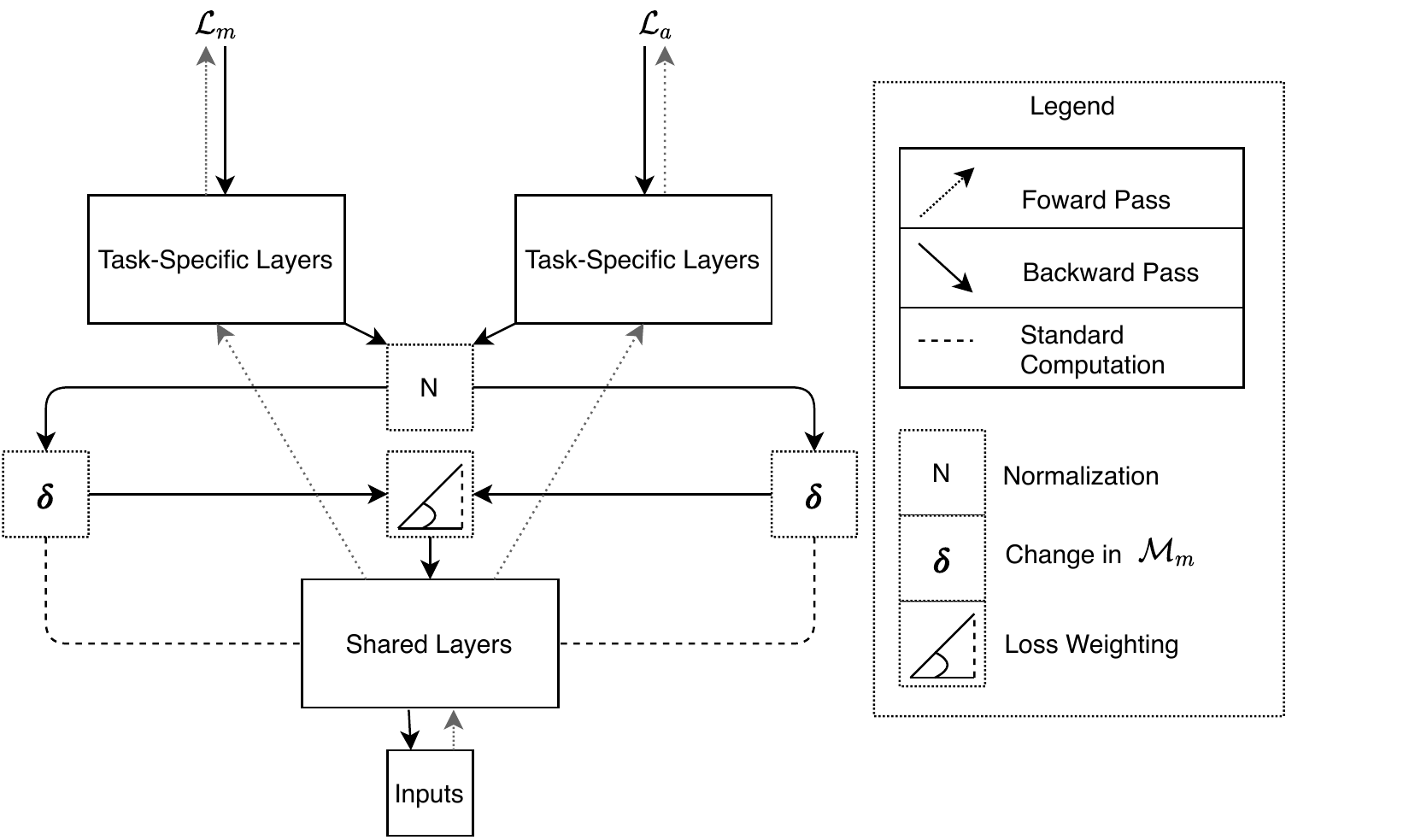}
		\caption{Diagram displaying the forward and backward pass for HydaLearn}
		\label{fig_hydralearn_forwback}
	\end{center}
\end{figure*}

\textbf{Problem Formulation.}
We focus on a problem comprising a main task $\mc T_m$ and an auxiliary task $\mc T_a$. As shown in Fig. \ref{fig_hydralearn_forwback}, we have a number of shared layers between the task and then we have task-specific layers.   Let $\mc L_m$ and $\mc L_a$ be the loss function associated with the main and auxiliary task respectively. The two losses are combined to form a total loss as follows: 
 \begin{align}
 \mc L(\theta_{s,t}, \theta_{m,t}, \theta_{a,t}) =& w_{m, t} \mc L_m(\theta_{s,t}, \theta_{m,t}) 
\nonumber\\ 
 &+ w_{a, t}\mc L_a(\theta_{s,t}, \theta_{a,t}),
 \label{eq_totloss}
 \end{align}
In \eqref{eq_totloss}, $t$ is the training step; $w_{m, t}, w_{a, t} \in \mathbb{R}_+$ denote the task weights, where $\mathbb{R}_+$ is a set of all positive real numbers; and $\theta_{s,t}, \theta_{m,t}$, $\theta_{a,t}$ denote model parameters of the shared and task-specific layers. We aim to find a solution for training the model parameters and develop a method to update task weights intelligently. 

The model parameters are updated using mini-batch SGD with the objective of minimizing total loss, for given task weights. Let $\theta_t \triangleq [\theta_{s,t}, \theta_{m,t}$, $\theta_{a,t}]$. The gradient descent update at training step $t$ can be written as: 
\begin{align}
 \theta_{t+1} = \theta_t  - \alpha \nabla_{\theta_t} \mc L(\theta_t), 
 \label{eq_gradstep}
\end{align}
where $\alpha$ denotes the gradient descent step-size, also called the learning rate. 

\textbf{Proposed Solution.}
The key innovation of HydaLearn lies in the way we determine the optimal loss weights for each mini-batch separately. For optimizing the task weights $w_{m, t}$ and $w_{a, t}$, we focus on maximizing gain on a given metric for the main task. Let  $\mc M_m$ denote the metric function for the main task. 
This metric is calculated over a much larger set of examples than the batch size, and is thus a more stable guide of progression. At the same time we do not fully block 'bad' gradient steps, ensuring sufficient stochasticity in the learning path to generalize well.
Furthermore, the choice of metric function is not constrained to any particular function type. This implies that non-differentiable metrics can be chosen, but also the corresponding task loss itself. Recent dynamic weight adaption methods (e.g., \cite{du2018adapting, li2019gradient}) require the underlying function to be strictly differentiable. Finding an approximation that is representative of the dynamics of the underlying function over the support region of interest is not straightforward. 

In the following theorem we first establish the relationship between the task weights and the gain on the metric $\mc M_m$. Next, in the ensuing paragraphs, we use this expression to develop the HydaLearn algorithm.

\begin{theorem}
Let $\delta_{m, m, t}$ and $\delta_{m, a, t}$ denote the gain computed on the main task metric at training step $t$, with gradient-descent steps executed separately on the main and auxiliary task-loss objectives. More concrete mathematical definition of the $\delta$'s is given in Appendix \ref{Ch6:Appendices:TBD}.  The following relationship holds between the gain values and the task weights:
\begin{align}
 \dfrac{w_{m, t}}{w_{a, t}} \approx \dfrac{\delta_{m, m, t}}{\delta_{m, a, t}}.
 \label{eq_delta_frac}
\end{align}
\end{theorem}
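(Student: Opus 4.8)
The plan is to derive \eqref{eq_delta_frac} from a first-order expansion of the main-task metric at $\theta_t$, exploiting the fact that the two task losses act on disjoint task-specific parameter blocks. First I would pin down the $\delta$'s in the sense of the appendix: $\delta_{m,m,t}$ is the change in $\mathcal{M}_m$ induced by a probe step that moves $\theta_t$ along $-\nabla_{\theta_t}\mathcal{L}_m$, and $\delta_{m,a,t}$ the change in $\mathcal{M}_m$ induced by a probe step along $-\nabla_{\theta_t}\mathcal{L}_a$. Since $\mathcal{M}_m$ depends only on $(\theta_{s,t},\theta_{m,t})$ (the auxiliary head does not feed the main output), $\mathcal{L}_m$ only on $(\theta_{s,t},\theta_{m,t})$, and $\mathcal{L}_a$ only on $(\theta_{s,t},\theta_{a,t})$, a linearisation turns each $\delta$ into an inner product of the relevant gradients evaluated at $\theta_t$; in particular the $\theta_{a,t}$-component of the auxiliary probe drops out of $\delta_{m,a,t}$.

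Next I would expand the metric gain of the genuine composite update \eqref{eq_gradstep}. With $\Delta\theta_{s}=-\alpha(w_{m,t}\nabla_{\theta_s}\mathcal{L}_m+w_{a,t}\nabla_{\theta_s}\mathcal{L}_a)$ and $\Delta\theta_{m}=-\alpha w_{m,t}\nabla_{\theta_m}\mathcal{L}_m$ (the block $\theta_{m,t}$ receives nothing from $\mathcal{L}_a$), a first-order Taylor expansion of $\mathcal{M}_m$ gives
\begin{align}
\mathcal{M}_m(\theta_{t+1})-\mathcal{M}_m(\theta_t)\;\approx\;w_{m,t}\,\delta_{m,m,t}+w_{a,t}\,\delta_{m,a,t}.
\end{align}
The point of the calculation is exactly this collapse: because the cross-block derivatives vanish, the per-batch main-task gain is, to leading order in $\alpha$, an affine function of the two weights whose coefficients are precisely the probe gains.

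Finally I would convert the decomposition into the claimed ratio by making explicit the sense in which the weights are ``optimal''. Maximising the leading-order gain $w_{m,t}\delta_{m,m,t}+w_{a,t}\delta_{m,a,t}$ over $(w_{m,t},w_{a,t})\in\mathbb{R}_+^2$ subject to a fixed update budget $w_{m,t}^2+w_{a,t}^2=c$ (any strictly convex budget works) yields, by Cauchy--Schwarz or a one-line Lagrange-multiplier argument, $(w_{m,t},w_{a,t})\parallel(\delta_{m,m,t},\delta_{m,a,t})$, which is \eqref{eq_delta_frac}; the $\approx$ is inherited from discarding the $O(\alpha^2)$ curvature terms of $\mathcal{M}_m$ and from the $\delta$'s being measured at a finite probe step.

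I expect the main obstacle to be legitimising that first-order step. The probe gains are evaluated at a nonzero learning rate on a genuinely non-convex $\mathcal{M}_m$, so the clean additivity above holds only modulo second-order terms in the step size and in the weights; bounding these, or arguing they are negligible in the small-$\alpha$, small-mini-batch regime the paper targets (and that evaluating $\mathcal{M}_m$ on a large hold-out stabilises the $\delta$ estimates), is the real content. A secondary issue is justifying the normalisation used to extract a ratio: without some budget constraint the linear objective has only a corner maximiser, and if a probe gain is negative the $\mathbb{R}_+$ constraint pushes the corresponding weight to the boundary, so the proportionality statement genuinely depends on that modelling choice being spelled out.
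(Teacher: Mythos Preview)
Your derivation is internally coherent but follows a genuinely different route from the paper's appendix. In the paper the probe gains are defined with the weights \emph{already inside} the probe steps: $\delta_{m,m,t+1}=\mc M_m(\theta_t-\alpha w_{m,t+1}\nabla_{\theta_t}\mc L_m)-\mc M_m(\theta_t)$ and similarly for $\delta_{m,a,t+1}$. A first-order Taylor expansion then gives $\delta_{m,m}\approx -\alpha w_m\,\nabla\mc M_m^{\!T}\nabla\mc L_m$ and $\delta_{m,a}\approx -\alpha w_a\,\nabla\mc M_m^{\!T}\nabla\mc L_a$, so the ratio $\delta_{m,m}/\delta_{m,a}$ equals $(w_m/w_a)$ times the ratio of the two inner products; the paper then \emph{assumes} $\nabla\mc M_m^{\!T}\nabla\mc L_a\approx\nabla\mc M_m^{\!T}\nabla\mc L_m$ (motivated by ``parameter vector dimension reasonably large, and loss functions on the same scale'') to cancel that factor. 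There is no optimisation step at all; \eqref{eq_delta_frac} emerges as a near-tautology once that inner-product equality is granted.

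Your version instead takes unweighted probes (which is in fact what Algorithm~\ref{Alg_HydaLearn} computes), shows the composite-update gain is $w_m\delta_{m,m}+w_a\delta_{m,a}$ to first order, and then extracts the ratio by maximising under a quadratic budget $w_m^2+w_a^2=c$. This buys a genuine optimality interpretation and avoids the paper's rather strong inner-product assumption, but the budget you need is not the one the paper actually imposes: the paper constrains $w_m+w_a=W$, and as you yourself note, under a linear constraint the linear objective has only corner maximisers, so proportionality does not follow from your argument in the paper's own setting. In short, both routes reach \eqref{eq_delta_frac} only by adding an extra modelling assumption---theirs is an approximate equality of directional derivatives, yours is a quadratic (rather than simplex) weight budget---and the two are not interchangeable.
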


\begin{proof}
See Appendix \ref{Ch6:Appendices:TBD}. 
\end{proof}
 
  With known values of $\delta$'s, and constraining the sum of the task weights, we can compute the weights from \eqref{eq_delta_frac}. We impose a constraint on the combined weights, given as $w_m + w_a = W$. This constraint serves two purposes. First, it makes the solution to the weight optimization problem easy to find\textemdash{two linear equations to solve for two unknowns}. Second, it helps to restrict the total learning rate to $W$ in the gradient descent algorithm in \eqref{eq_gradstep}. The latter constraint is also used in other work, e.g., \cite{chen2017gradnorm}. It is interesting to note that the result in \eqref{eq_delta_frac} does not depend on any derivative of the loss and the metric functions and the computations of $\delta$'s in our proposed algorithm are based on metric function $\mc M_m$ values from two updates, further explained in the ensuing sections.
 
 The overall procedure to train the model parameters and the task weights is described in Algorithm \ref{Alg_HydaLearn}. We call this algorithm HydaLearn, or \emph{Highly Dynamic Learning}. The weight updating procedure in the algorithm aims to optimize the metric function $\mc M_m$ at each training step. The weights can adapt to the signal coming from the individual task losses, to realize maximal gain on the metric. Extension to cases where the metric function needs to be minimized is straightforward. Furthermore, one can choose to optimize the metric on (a subset of) either the training or validation set. 

\textbf{Analysis.} 
The current HydaLearn solution is applicable to two-task settings\textemdash{a main and an auxiliary task}. Extension to cases that entail more than one auxiliary task is planned in upcoming work.

\begin{algorithm}[t]
\caption{HydaLearn algorithm}\label{Alg_HydaLearn}
\begin{algorithmic}[1]
    \State \textbf{Input}:
    \State \quad Main task loss $\mc L_m$ and auxiliary task loss $\mc L_a$
    \State \quad Main task metric $\mc M_m$
    \State \quad Learning rate $\alpha$
    \State \quad Total weight $W$
    \State \quad Select dataset on which to compute metric $\mc M_m$
    \State 
    \State \textbf{Initialize}:
    \State \quad $\theta_0$, $\mu_0$, $w_m=w_a=W/2$ and $N$ total training steps
    \State
    \Function{ComputeTaskWeight}{} \Comment{Function for computing task weights}
        \State \textbf{Compute} $\delta_{m, m}$:
        \State \quad $\theta'_{s, t+1} \gets \theta_{s, t}  - \alpha \nabla_{\theta_{s, t}} \mc L_m(\theta_{s,t}, \theta_{m,t})$ 
        \State \quad $\mu_{m, m, t+1} \gets$ Compute $M_m(\theta'_{s, t+1}, \theta_{m, t+1})$
        \State  \quad $\delta_{m, m, t+1}\gets \mu_{m, m, t+1} - \mu_t$ 
        \State \textbf{Compute} $\delta_{m, a}$:
        \State \quad $\theta'_{s, t+1} \gets \theta_{s, t}  - \alpha \nabla_{\theta_{s, t}} \mc L_a(\theta_{s,t}, \theta_{a,t})$
        \State \quad $\mu_{m, a, t+1} \gets$ Compute $M_m(\theta'_{s, t+1}, \theta'_{m, t+1})$ 
        \State  \quad $\delta_{m, a, t+1}\gets  \mu_{m, a, t+1} - \mu_t$ 
        \State \textbf{Update} $w_m$ and $w_a$ such that
        $\tfrac{w_{m, t+1}}{w_{a, t+1}} = \tfrac{\delta_{m, m, t+1}}{\delta_{m, a, t+1}}$ and $w_{m, t+1} + w_{a, t+1} = W$ hold.  
        \If{$\delta_{m, m, t+1} \geq \delta_{m, a, t+1}$}
            \State $\mu_{t+1} \gets \mu_{m, m, t+1}$
        \Else 
            \State $\mu_{t+1} \gets \mu_{m, a, t+1}$
        \EndIf
    \EndFunction
    \State 
    \For {$t=1$ to $N$}  \Comment{Main training loop}
        \State Sample a mini-batch from the training data-set
        \State \textbf{Update model parameters of task specifics layers}:
        \State \quad $\theta_{m, t+1} \gets \theta_{m, t}  - \alpha \nabla_{\theta_{m, t}} \mc L_m(\theta_t)$ 
        \State \quad $\theta_{a, t+1} \gets \theta_{m, t}  - \alpha \nabla_{\theta_{a, t}} \mc L_a(\theta_t)$ 
        \State \textbf{Update task weights}: ComputeTaskWeight() function
        \State \textbf{Update model parameters of shared layers}:
        \State \quad $\mc L(\theta_t) \gets w_{m, t+1} \mc L_m(\theta_t) + w_{a, t+1}\mc L_a(\theta_t)$
        \State \quad $\theta_{t+1} \gets \theta_t  - \alpha \nabla_{\theta_t} \mc L(\theta_t)$
    \EndFor
\end{algorithmic}
\end{algorithm}

The inner working of the HydaLearn algorithm can be better understood with the help of Fig. \ref{fig_hydralearn_forwback}. The forward pass is a standard step, whereas backward pass is taken in three steps, in the following order: 
\begin{enumerate}
    \item Task specific model parameters are updated.
    \item Then for the computation of $\delta$'s we perform two, what we call, fake-updates for the shared layer model parameters\textemdash{one based on main-task loss and the second based on the auxiliary-task loss}. The update is called fake because the resulting model parameters are only used in the computation of $\delta$'s. After each fake update, the corresponding $\delta$ is computed. Based on the resulting values of $\delta$'s, task weights are computed. 
    \item Finally, the new task weights are then used for the actual update of the shared layers model parameters. 
\end{enumerate}

\textbf{Comparison with Related Approaches.} \quad
Existing methods are not geared towards handling the high-variance estimators of the gradient, i.e., mini-batches.
While the gradient cosine similarity (Gcosim) based method proposed in \cite{du2018adapting} uses information about the gradient of the main task to ground the contribution of the auxiliary task, \cite{lin2019adaptive} uses feedback from past batches to take gradient-descent steps on the task weights. When the main task gradient is noisy, Gcosim can exacerbate bad updates, or block helpful auxiliary task gradients, cf., the bottom left panel in Fig. \ref{fig_gcosim_hydalearn}. When the gradient cosine similarity is positive, there is no normalization of the gradient size, and one loss may dominate the other, cf., top panels in Fig. \ref{fig_gcosim_hydalearn}. Our algorithm handles such cases differently, as illustrated in the left panels. Even though $\mc T_a$ gradients are dominant, more weight is given to $\mc T_m$ for this particular update, due to the superior direction of its gradients. Olaux \cite{li2019gradient}, on the other hand, is not geared towards highly varying weights since: (i) it only features a single gradient step on the task weights at a time and; (ii) this gradient step towards the task weights occurs after network parameters are updated for the current batch, i.e., the weighting is not related to the contribution of the current batch.

\begin{figure}[t] 
	\begin{center}
		\includegraphics[scale=0.37]{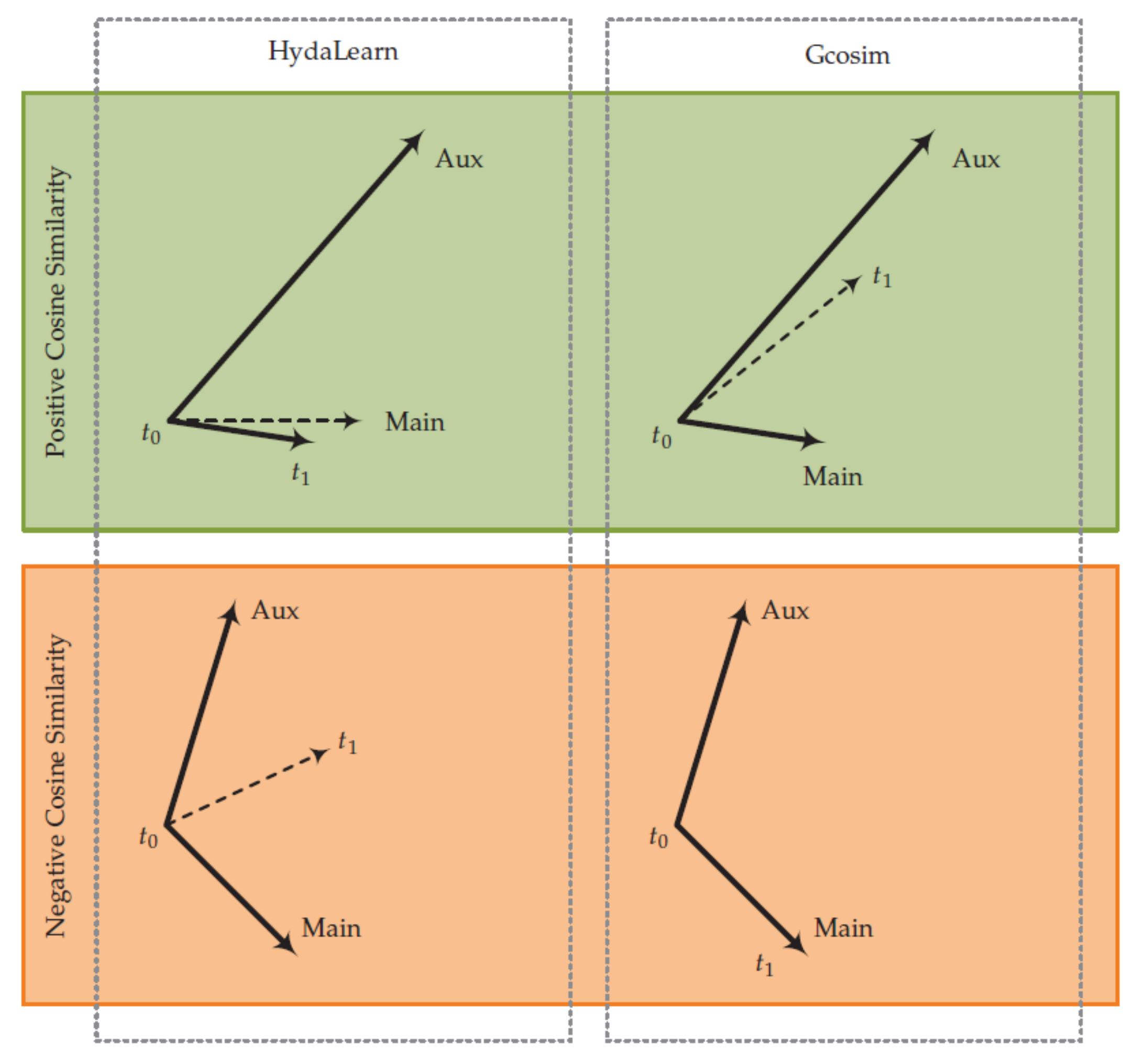}
		\caption{Conceptual diagram illustrating the difference in behaviour between HydaLearn and Gcosim.}
		\label{fig_gcosim_hydalearn}
	\end{center}
\end{figure}

In the ensuing sections we evaluate our algorithm on synthetic toy examples as well as on real world datasets. We compare results with a list of prominent algorithms, from the current state of art, that tackle similar MTL problem. There we will show that our algorithm enables redistribution of weight towards the task for which a particular mini-batch yields the most valuable signal, i.e., causes the most performance gain on the main task.

\section{Experiments}

\subsection{Baselines}
For performance comparison we have selected following algorithms as a baseline, where the first two can be viewed as standard baselines used in such comparative studies whereas the latter three are current state of the art. 
\begin{enumerate}
    \item \textbf{Single Task Model} (STL):  A model trained only on the main task
    \item \textbf{Static Loss weights} (Static): A baseline with static weights throughout the full training process
    \item \textbf{GradNorm}: GradNorm \cite{chen2017gradnorm} balances the training rates of all tasks by normalizing the gradient magnitudes through tuning of the multitask loss 
    \item \textbf{Gcosim}: This algorithm only uses the auxiliary task gradient when it aligns with the main task gradient as defined by cosine similarity \cite{du2018adapting}
    \item \textbf{OLaux}: Online cross-validation method for adaptive task weighting. The weights of the auxiliary task are updated based on past batches with gradient descent, every few steps \cite{li2019gradient}
\end{enumerate}
For each of these baselines, we use the same encoder-decoder backbone architecture. We assign an equal budget for optimization,  the learning rate and algorithm-specific parameters to ensure a fair comparison.

\subsection{Simulation Study}
Using a toy example, we first show that our model is able to efficiently learn when the usefulness of both task gradients is varying. Second, we demonstrate that our model can ignore harmful auxiliary tasks.

\textbf{Setup}.
We generate two toy examples:
\begin{description}
    \item[\textbf{Exp1}--] \textbf{Related Auxiliary Task:} The main and auxiliary tasks are sampled from the $\tanh$ function class according to \eqref{eq_toyreg}. As such, the tasks are related through their common basis $\mathbf{B}$, and the auxiliary task can help learning the main task. Furthermore, some Gaussian noise is added to both tasks. 
    
    \item[\textbf{Exp2}--] \textbf{Unrelated Auxiliary Task:} To ensure the unrelatedness of the auxiliary task, the output values for $\mc T_a$ are uniformly sampled, I.I.D., from the output range of $\mc T_m$ over the dataset, i.e., there is no systematic relatedness between the tasks. 
\end{description}
All models are trained using the same backbone. For Exp1, a 4-layer encoder, and two 1-layer task-specific decoders with 64 and 32 neurons, respectively. For Exp2, a 2-layer encoder and two 1-layer task-specific decoders with 40 and 20 neurons were employed. A small grid-search is used to optimize algorithm-specific hyper-parameters. All other hyper-parameters are kept the same for every method.
The exact implementation details for the toy examples are the corresponding models can be found in Appendix \ref{annex_toyreg}. The models are both trained and evaluated with standard mean squared error (MSE) as loss function $\mc L_m$ and $\mc L_a$,  and mean absolute error (MAE) as metric function $\mc M_m$. 

\begin{figure*}[t] 
	\begin{center}
		\includegraphics[scale=0.68]{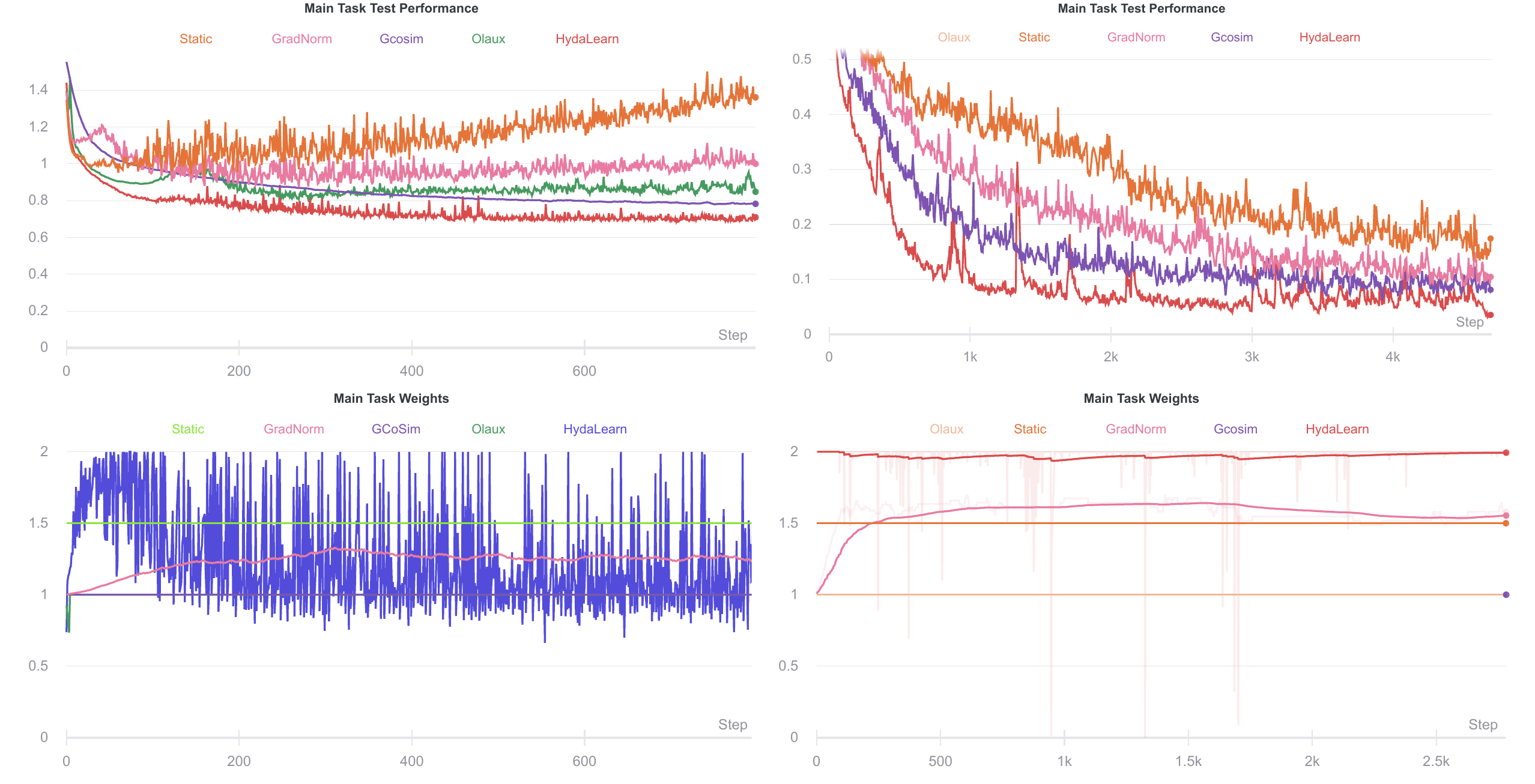}
		\caption{Results of the toy examples. The left and right hand graphs display results with a helpful (Exp1) and an unhelpful (Exp2) auxiliary task.}
		\label{fig_toyreg}
	\end{center}
\end{figure*}

As an implementation detail for HydaLearn, we allow down-scaling of so-called total learning rate $W$ when both $\delta$'s are negative, as follows: 
\begin{align}
    W' = W\left(1 + \exp\left(\dfrac{-w_a}{w_m}\right)\right)^{-1},
    \label{eq_lr_down}
\end{align}
where the training-step index $t$ is omitted for brevity. Both negative $\delta$'s value means neither of the task gradients for the given mini-batch is supporting improvement on the metric $\mc M_m$. One possibility was just skip updating the weights for that batch. But by doing we will not exploit information during that training step.  

In another implementation detail, fitting an exponential function to the $\delta$ ratio helps to amplify the difference in gain values on the metric $\mc M_m$ from gradients of the  corresponding tasks. More concretely, we do this as follows:
\begin{align}
    \dfrac{w_m}{w_a} = \left(\dfrac{\delta_m}{\delta_a}\right)^\beta,
    \label{eq_delta_expon}
\end{align}
where $\beta \in \mathbb{R}_{+}$ can be assumed as a hyper-parameter of the HydaLearn algorithm. The $\beta$ value can be used to control dynamics or stochasticity of weight changes from batch-to-batch.  

\textbf{Results}.
Results are shown in Fig. \ref{fig_toyreg}. From the figure we can see that for Exp1, the task weights are highly variable for HydaLearn. These drastic weight adjustments positively influences training, as HydaLearn outperforms the baseline methods. On the other hand, when $\mc T_a$ is harmful (Exp2), its weights are consistently low. As such, interference from harmful gradients coming from $\mc L_a$ is suppressed. 

The possible benefits of highly adaptive weighting methods are further evidenced through comparison with the baseline methods. Fig. \ref{fig_toyreg} shows that both the static baseline and GradNorm weightings consistently lie in between the trend weight values of HydaLearn. Yet, the MSE of HydaLearn is about 50\% higher. Olaux performs relatively poorly for both tasks. Its ever increasing weight allocation to the harmful auxiliary task is most surprising. Since Gcosim only considers the auxiliary task gradient when its cosine similarity with the main task gradient is positive, it is intuitively well suited for (Exp2). However, it does not completely rule out interference, as indicated by the superior test time performance of HydaLearn. The implementation details can be found in  Appendix \ref{annex_toyreg}.

\section{Experiments on Real World Datasets}
\subsection{Datasets and Tasks}
We apply HydaLearn to two pairs of supervised learning tasks from the following datasets:

\textbf{MIMIC-III.} 
The MIMIC-III database \cite{johnson2016mimic, saeed2011multiparameter} is comprised of de-identified data of over 60000 intensive care unit (ICU) stays. 
MIMIC-III is a popular resource for machine learning research for a variety of tasks, such as mortality prediction \cite{purushotham2017benchmark, mayaud2013dynamic, johnson2017reproducibility}, length of stay prediction \cite{purushotham2017benchmark, gentimis2017predicting}, and sepsis prediction \cite{nemati2018interpretable}. Since such tasks are often related, the database is also commonly used as a benchmark for multi-task learning algorithms \cite{suresh2018learning, harutyunyan2019multitask}.
Clinical data is often very noisy. Similarly, in the MIMIC dataset, the base features get recorded only sparsely, and at irregular intervals, requiring heavy imputation. 

We predict in-hospital mortality (classification) as our main task using features collected in the first 48h of stay. For the auxiliary task, the length of stay (regression) is used, which ends with either death or discharge from the hospital. As such, this experiment features a combination of a classification (area under the curve (AUC)) and a regression (MSE) loss. Both losses thus operate on a different scale, which can cause imbalances during learning. Furthermore, the dataset is high dimensional relative to its sample size. This can also impede learning, but can be mitigated through the regularizing effect of MTL. Further dataset details can be found in Appendix \ref{annex_mimic}.

\textbf{Fannie Mae Loan Performance.}
Data on mortgage default typically has an extreme class imbalance; the large majority of mortgage holders never default. Multitask learning is one way to amplify the signal of the minority class \cite{caruana2000learning}. As an auxiliary task, we propose to use prepayment prediction. By jointly learning both tasks, we can incorporate signal from future prepayments in the default model - a trick known as 'using the future to predict the present' \cite{ruder2016overview, caruana1996using}. For preprocessing and implementation details, see Appendix \ref{annex_fm}.

\textbf{Setup.}
For mortality prediction and default prediction, we use 4-shared layers with 48-neuron each, and 2-layer with 24-neuron encoders, respectively. The decoders are composed of 2 task-specific layers for each task, with 12 neurons for mortality prediction, and 24 for default prediction. These backbones were found to be suitable for these problems through a random search on the 'static baseline. The networks are optimized with vanilla mini-batch SGD with batch-size 16. This choice is motivated by the fact that alternative optimizers such as Adam or RMSprop scale the learning rate. In our experience, HydaLearn performs well with Adam too. The training time is chosen through early stopping on the validation set.

Since many of the baseline methods, and HydaLearn work on the gradient level, the learning rate is an important parameter. We perform a grid search over a range of sensible values of the learning rate for each method. Furthermore, we optimize the algorithm-specific hyper-parameters. The grids that were used, as well as the final picks for testing are reported in Table \ref{tabel_hyper_param} given in Appendix \ref{annex_toyreg}.

\subsection{Results}

Table \ref{Table_mimic_alg_comp} compares performance of different algorithms on the MIMIC dataset. The reported results are evaluated on the hold-out or test data-set. From the table we can see that HydaLearn gives better performance on the main task-metric compared to the STL baseline. Furthermore, HydaLearn surpasses the best performing algorithm from the recent state of art in multi-task learning with dynamic weight adaption for the MIMIC dataset. The same trend extends to the Fannie Mae dataset, as reported in Table \ref{Table_fm_alg_comp}. These performance gains on the real datasets, compared to the baslines, validate the effectiveness of the HydaLearn algorithm in dynamically extracting good values of the task weights.

Furthermore, from Fig. \ref{fig_wtone} we can derive that Olaux, HydaLearn and the static weight method take very different approaches to learning default. On average, Olaux learns a very balanced weighting, while the optimal Static combination weight has a very high weight for the main task, i.e. default prediction. In contrast, HydaLearn, gives a higher weight to the prepayment task on average. For batches with no default observations, it predominantly learns its shared representation through prepayment. When a batch is sampled for which the gradient of the default task is highly informative, i.e. one containing examples of the minority class, HydaLearn can adapt and immediately allocate high weight. Such dynamics would not be possible without the highly adaptive weights that characterize our algorithm.

\begin{table}[t!]
\caption{Experimental results for in-hospital-mortality prediction.}
\begin{center}
\begin{tabular}{| p{2cm} || p{2cm} | p{2cm} |}
\hline
 Model & AUC Metric & Std Deviation  \\ 
 \hline
 \hline
 HydaLearn & \textbf{0.839} & 0.003  \\
 \hline
 Gcosim & 0.774 & 0.018 \\
 \hline
 Olaux & 0.833 & 0.005 \\
 \hline
 GradNorm & 0.767 & 0.014 \\
 \hline
 Static & 0.834 & 0.007 \\
 \hline
 STL & 0.819 & 0.004 \\
 \hline

\end{tabular}
\end{center}
 \label{Table_mimic_alg_comp}

\end{table}

\begin{table}[t!]

\caption{Experimental results for default prediction.}
\begin{center}
\begin{tabular}{| p{2cm} || p{2cm} | p{2cm} |}
\hline
 Model & AUC Metric & Std Deviation  \\ 
 \hline
 \hline
 HydaLearn & \textbf{0.760} & 0.009  \\
 \hline
 Gcosim & 0.738 & 0.011 \\
 \hline
 Olaux & 0.743 & 0.015 \\
 \hline
 GradNorm & 0.734 & 0.021 \\
 \hline
 Static & 0.745 & 0.019 \\
 \hline
 STL & 0.732 & 0.010 \\
 \hline
\end{tabular}
\end{center}
\label{Table_fm_alg_comp}
\end{table}

\begin{figure}[t] 
	\begin{center}
		\includegraphics[scale=0.55]{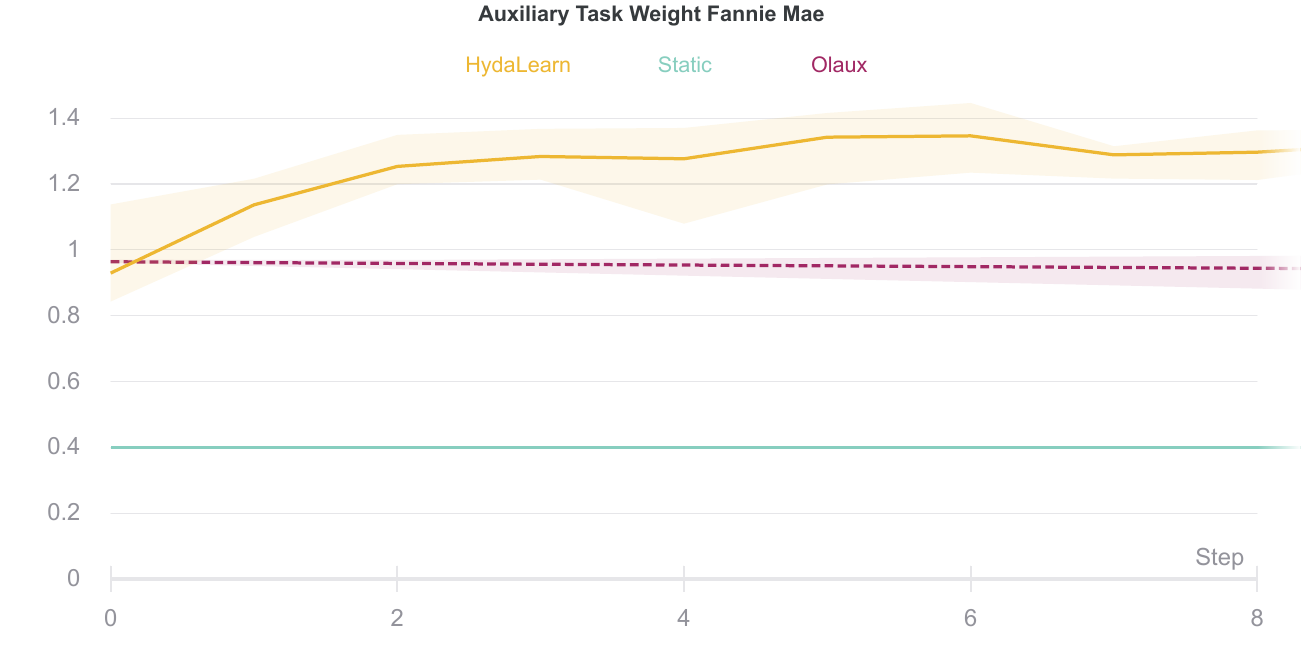}
		\caption{Average weights at the epoch level for the auxiliary task, Fannie Mae data}
		\label{fig_wtone}
	\end{center}
\end{figure}

\subsection{Parameter Impact Analysis} \label{Sec_impact_analysis}
As introduced in Section \ref{sec_alg_prop}, our model relies on so-called fake updates to gauge the usefulness of the task gradients for gain on the main task metric $\mc M_m$. Now in this section, we perform an impact study and study how model performance changes when we disable certain component of our algorithm. Concretely, we consider following four experiments, the results for which are presented in Table \ref{Table_impact_fm} and Table \ref{Table_impact_mimic};
\begin{itemize}
    \item{ExpImp-0:} Same configuration as in the preceding section. Key components of this configuration include: (c-i) normalizing gradients with Euclidean norm  during fake updates, (c-ii) computation of metric $\mc M_m$ on the validation dataset, and (c-iii) down-scaling of total learning rate as specified in \eqref{eq_lr_down}.  
    \item{ExpImp-1:} Compared to  ExpImp-0, we disable normalizing gradients by removing the normalization step (c-i) completely. rest of the configuraiton is same as in ExpImp-0. This implies that, in absence of normalization, if the gradients from the two fake update point in an equally beneficial direction for the main task, the gradient with a larger magnitude will dominate the final update. From  results in Table \ref{Table_impact_fm} and Table \ref{Table_impact_mimic}, we observe a small degradation, which is a little bit more on the MIMIC dataset. As such the degradation for both datasets are within the measurement uncertainty limit and thus negligible. However, the relatively larger change for the MIMIC dataset could partly be because the two tasks are of different type and essentially have completely different dynamic range.   
    \item{ExpImp-2:} All configuration parameters same as in ExpImp-0, except that the  $\delta$'s are computed over the training dataset. When calculating the $\delta$'s based on the validation set, signal from the validation set is incorporated in training. This has the undesirable side effect of increasing bias in validation metrics, complicating model selection and early stopping. If there is no performance trade-off, it makes sense to calculate $\delta$'s using the training set. However, this could encourage over-fitting on the training set. Based on the reported results, we see no evidence of over-fitting. But this because we used the entire training data in the computation of $\delta$'s. Although note reported in the tables, we observed strong evidences of over-fitting when relatively smaller subset of training dataset was used, which is inline with expectation.    
    \item{ExpImp-3:} In this experiment, we disabled the down-scaling of the total learning rate. All other configuration parameters were kept same as in ExpImp-0. From the results, we can see that the impact of this down-scaling feature is really negligible for the two datasets. However, in some of the toy examples on synthetic data we observed non-negligible gains. Further investigation is needed to ascertain the usefulness of this particular feature for other real world datasets.      
\end{itemize}

\begin{table}[t!]
\caption{Parameter impact study results for Fannie Mae data-set.}
\label{Table_impact_fm}
\begin{center}
\begin{tabular}{| p{2cm} || p{2cm} | p{2cm} |}
\hline
 Experiment & AUC Metric & Std Deviation  \\ 
 \hline
 \hline
 ExpImp-0& \textbf{0.760} & 0.009   \\
 \hline
 ExpImp-1 & 0.758 & 0.006 \\
 \hline
 ExpImp-2 & 0.750 & 0.01 \\
 \hline
 ExpImp-3 & 0.758 & 0.009 \\
 \hline
\end{tabular}
\end{center}
\end{table}

\begin{table}[t!]
\caption{Parameter impact study results for MIMIC data-set.}
\label{Table_impact_mimic}
\begin{center}
\begin{tabular}{| p{2cm} || p{2cm} | p{2cm} |}
\hline
 Experiment & AUC Metric & Std Deviation  \\ 
 \hline
 \hline
 ImpExp-0 & \textbf{0.839} & 0.003  \\
 \hline
 ImpExp-1 & 0.819 & 0.049 \\
 \hline
 ImpExp-2 & 0.839 & 0.004 \\
 \hline
 ImpExp-3 & 0.840 & 0.003 \\
 \hline
\end{tabular}
\end{center}
\end{table}

\section{Concluding Remarks}
In this work we presented a novel approach to dynamic task weighting in multi-task networks. We have shown that informing gradient weighting through an external metric can match and outperform the current state-of-the-art for two supervised learning tasks. The broader impact of these findings is twofold. First, implementing powerful multi-task models is made easier for practitioners, relieving them from the burden to find suitable static task loss weights, which may still perform worse than dynamic weighting. Second, current work offers a different approach to the dynamic weighting problem compared to previous work, motivated by mathematical grounding. Potential for future work is ample. For example, derivation of an approximation of the $\delta$ from the gradients, without computing extra backward passes would significantly reduce computational overhead. Furthermore, this framework can also be applied for high-dimensional problems such as object detection and NLP.

\bibliography{bibliography}

\section{Appendix}

\subsection{Function for Task Weight Optimization}\label{Ch6:Appendices:TBD}
Assume $\mc M_m$ is a differentiable function. The gain on the main task metric after taking a gradient descent step on weighted main task-loss objective $w_{m, t+1}\mc L_m(\theta_t)$ can be written as follows:  
\begin{align}
 \delta_{m, m, t+1} =& \mc M_m(\theta_{t+1}) -  \mc M_m(\theta_{t})\nonumber\\
 =& \mc M_m(\theta_t  - \alpha w_{m, t+1}\nabla_{\theta_t} \mc L_m(\theta_t)) -  \mc M_m(\theta_{t})\nonumber\\
 \approx& \mc M_m(\theta_t) - \alpha w_{m, t+1}\nabla_{\theta_t}\mc M_m(\theta_t)^T \nabla_{\theta_t} \mc L_m(\theta_t)\nonumber\\
 &- \mc M_m(\theta_{t}) \nonumber\\
 &= -\alpha w_{m, t+1}\nabla_{\theta_t}\mc M_m(\theta_t)^T \nabla_{\theta_t} \mc L_m(\theta_t),
 \label{eq_delta_m}
\end{align}
where the approximation follows from the first-order Taylor series expansion. Similarly the gain on the main task metric after taking a gradient descent step on weighted auxiliary task-loss objective $w_{a, t+1}\mc L_a(\theta_t)$ can be written as follows:  
\begin{align}
 \delta_{m, a, t+1} =& \mc M_m(\theta_{t+1}) -  \mc M_m(\theta_{t})\nonumber\\
 =& \mc M_m(\theta_t  - \alpha w_{a, t+1}\nabla_{\theta_t} \mc L_a(\theta_t)) -  \mc M_m(\theta_{t})\nonumber\\
 \approx& \mc M_m(\theta_t) - \alpha w_{a, t+1}\nabla_{\theta_t}\mc M_m(\theta_t)^T \nabla_{\theta_t} \mc L_a(\theta_t)\nonumber\\
 &- \mc M_m(\theta_{t}) \nonumber\\
 =& -\alpha w_{a, t+1}\nabla_{\theta_t}\mc M_m(\theta_t)^T \nabla_{\theta_t} \mc L_a(\theta_t),
 \label{eq_delta_a}
\end{align}
where again the approximation is obtained by applying the first-order Taylor series expansion. Based on \eqref{eq_delta_m} and \eqref{eq_delta_a}, we can write
\begin{align}
 \dfrac{\delta_{m, m, t+1}}{\delta_{m, a, t+1}} = \left(\dfrac{w_{m, t+1}}{w_{a, t+1}}\right)\left(\dfrac{\nabla_{\theta_t}\mc M_m(\theta_t)^T \nabla_{\theta_t} \mc L_m(\theta_t)}{\nabla_{\theta_t}\mc M_m(\theta_t)^T \nabla_{\theta_t} \mc L_a(\theta_t)}\right)
\end{align}

\begin{align}
\dfrac{w_{m, t+1}}{w_{a, t+1}} = \left(\dfrac{\delta_{m, m, t+1}}{\delta_{m, a, t+1}}\right) \left(\dfrac{\nabla_{\theta_t}\mc M_m(\theta_t)^T \nabla_{\theta_t} \mc L_a(\theta_t)}{\nabla_{\theta_t}\mc M_m(\theta_t)^T \nabla_{\theta_t} \mc L_m(\theta_t)}\right)
\end{align}

Assuming  
\begin{align}
\nabla_{\theta_t}\mc M_m(\theta_t)^T \nabla_{\theta_t} \mc L_a(\theta_t) \approx \nabla_{\theta_t}\mc M_m(\theta_t)^T \nabla_{\theta_t} \mc L_m(\theta_t),\nonumber
\end{align}
we get
\begin{align}
 \dfrac{w_{m, t+1}}{w_{a, t+1}} \approx \dfrac{\delta_{m, m, t+1}}{\delta_{m, a, t+1}}. 
\end{align}
The underlying assumption is expected to hold when parameter vector dimension reasonably large, and loss functions are on the same scale.

\subsection{Toy Example Details}\label{annex_toyreg}
Following \cite{chen2017gradnorm}, we sample two regression tasks from the following functions:
\begin{equation}
    f_i(\textbf{x}) = \sigma_i \tanh{((\mathbf{B} + \mathbf{\epsilon_i})\mathbf{x})},
    \label{eq_toyreg}
\end{equation}
where $\mathbf{x}$ is the input vector. $\mathbf{\epsilon_i}$ and $\mathbf{B}$ are constant matrices representing a task-dependent and a shared component, respectively. The $\sigma_i$ linearly affect the scale of the tasks.

\begin{table}[h]
\caption{Toy example dataset details}
\begin{tabular}{|l|l|l|l|}
\hline
Toy Example & Training/Val/Test size & Inputs $\mathbf{x}$ & Outputs\\
\hline
\hline
$T_(helpful)$& 10000/2000/2000        & 75         & 25      \\
\hline
$T_(unhelpful)$ & 1000/200/200           & 25         & 5      \\
\hline
\end{tabular}
\end{table}

For both experiments $\mathbf{B}$ was sampled I.I.D. from a Gaussian with mean 0 and variance 10 , and $\mathbf{\epsilon}_i$ from a Gaussian with mean 0 and variance 3.5. To represent common scale differences in tasks. The scaling parameters $\sigma_m$ and $\sigma_a$ were set to 1 and 10, respectively. 

\begin{table}[h]
\centering
\caption{Final Hyperparameters}
\label{tabel_hyper_param}
\begin{tabular}{|l|l|l|l|l|l|l|}
\hline

        &    & HydaLearn & Gcosim & Olaux & Static & STL    \\
\hline
\hline
Toy 1/2 & AS & 6         & /      & 5     & 1.5    & /      \\
        & Lr & 0.01      & 0.01   & 0.01  & 0.01   & 0.01   \\
\hline
MIMIC   & AS & 3         & /      & 5     & 1.6    & /      \\
        & Lr & 0.005     & 0.001  & 0.01  & 0.005  & 0.0025 \\
\hline
FM      & AS & 3         & /      & 5     & 1.6    & /      \\
        & Lr & 0.05      & 0.05   & 0.01  & 0.01   & 0.001  \\
\hline
\end{tabular}
\end{table}

\begin{table}[h]
\centering
\caption{Base Feature Sets}
\begin{tabular}{|l|l|}
\hline
Fannie Mae Features           & MIMIC base features                             \\ \hline
       \hline
seller                & capillary refill rate              \\
servicer              & glascow coma scale eye opening     \\
loan purpose          & glascow coma scale motor response  \\
first time home-buyer & glascow coma scale total           \\
channel               & glascow coma scale verbal response \\
PPM                   & diastolic blood pressure           \\
occupancy             & fraction inspired oxygen           \\
product type          & glucose                            \\
property state        & hearth rate                        \\
property type         & height                             \\
MSA                   & mean blood pressure                \\
units                 & oxygen saturation                  \\
CLTV                  & respiratory rate                   \\
DTI                   & systolic blood pressure            \\
UPB                   & temperature                        \\
LTV                   & weight                             \\
interest rate         & pH                                 \\
loan term             &                                    \\
num borrowers         &                                    \\
servicer              &                                    \\
credit score          &                                    \\
6 months to maturity  &                                    \\ \hline
\end{tabular}
\end{table}

 For GradNorm and Olaux we take the recommended values \cite{chen2017gradnorm, lin2019adaptive}. Note that for the experiments with real data we perform hyperparameter optimisation.

\subsection{Preprocessing and Implementation: MIMIC}\label{annex_mimic}

Only episodes that last longer than 48 hours are considered. For in-hospital-mortality preprocessing, we follow the same approach as for the logistic regression baseline in \cite{harutyunyan2017multitask} \footnote{https://github.com/YerevaNN/mimic3-benchmarks} to enrich 17 base-feature dataset. First, a given sequence is divided into 7 sub-sequences. Next, features are extracted for each subsequence, based on statistical characteristics of the original timeseries variables. Specifically; mean, standard deviation, minimum, maximum, skewness and number of measurements \cite{harutyunyan2017multitask}. This procedure yields 714 features (7 subsequences X 6 statistic features X 17 base features).

For all the models, we used the same encoder/decoder set-up. We used a random sweep over a range of possible configurations of the 'static' baseline to determine the backbone and batchsize parameters. The shared layers and both task-specific heads of the network consist of 4 layers with 48 neurons, and 2 layers with 24 neurons, respectively. The learning rate, early stopping point, and algorithm-specific hyperparameters were decided using gridsearch.

\subsection{Preprocessing and implementation: Fannie Mae}\label{annex_fm}
We use a slice of the Fannie and Mae dataset\footnote{https://www.fanniemae.com/portal/funding-the-market/data/loan-performance-data.html}. It includes data on over one million mortages. Improved sample efficiency through MTL has diminishing returns. As such, it makes sense to subsample the dataset to a reasonable size. Consequently, we take a uniformly sampled, I.I.D. slice of 10000 data points, containing mortgages that were accepted between 2000 and 2009. For prediction, we use the status at the start of 2010 to predict occurrence of default and prepayment over the next twelve months. 

The continuous and categorical features were standardized and onehot encoded, respectively. The resulting 138 features are used in our experiments. The basic backbone architecture is the same for all models used in the experiments with the Fannie Mae dataset. This backbone consists of 2 24-neuron shared layers, and two 2-layer 12-neuron task-specific heads. Again, learning rate, early stopping, and algorithm-specific hyperparameters are determined for each baseline separately by grid-search.

\vspace{12pt}

\end{document}